\newif\ifusePlainBib
\newcommand{\vamp}{{\sc vamp}}
\newcommand{\powerset}[1]{\mathbb{P}(#1)}
\newif\ifcanbeverbose
\newcommand{\proofInSupp}{The proof can be found in the Supplementary Material.}
\newif\ifForWAFR
\begin{document}
\mainmatter              
\title{Look before you sweep: \\Visibility-aware motion planning}
\titlerunning{\vamp{}}  

\author{Gustavo Goretkin \and Leslie Pack Kaelbling \and Tom\'{a}s Lozano-P\'{e}rez}

\authorrunning{Gustavo Goretkin et al.} 
\institute{
	Massachusetts Institute of Technology, Cambridge MA 02139, USA,\\
	\email{\{goretkin, lpk, tlp\}@csail.mit.edu},\\
}

\maketitle              

\begin{abstract}
	This paper addresses the problem of planning for a robot with a directional obstacle-detection sensor that must move through a cluttered environment.
	The planning objective is to remain safe by finding a path for the complete robot, including sensor, that guarantees that the robot will not move into any part of the workspace before it has been seen by the sensor.
	Although a great deal of work has addressed a version of this problem in which the ``field of view'' of the sensor is a sphere around the robot, there is very little work addressing robots with a narrow or occluded field of view.
	We give a formal definition of the problem, several solution methods with different computational trade-offs, and experimental results in illustrative domains.
\end{abstract}

\section{Introduction}

Consider a mobile-manipulation robot that must move through a crowded environment.
If the location of all obstacles in the environment is known, then the problem it faces is a familiar motion-planning problem.
But if the environment can contain unknown obstacles, then the robot must incorporate sensing into its plan in order to guarantee that it will not collide with anything.
In one extreme version of this problem, the environment is entirely unknown, and would best be treated with a combination of map-building and exploration.
We will focus on a different regime, that arises in the case of a household robot:  the primary obstacles in the domain (e.g. walls, refrigerators) are known but there are other temporary obstacles (e.g., toys, trash cans, lightweight chairs).
In this case, it is worthwhile to plan a path to a target configuration, but the path must take visibility into account, making sure that it never moves into a region of space that it has not already observed.
Should the robot encounter an unexpected object, it could then make a new plan taking it into account, or move it out of the way.

When we speak of visibility, we mean any robot-mounted ability to gather information about the locations of obstacles in its neighborhood.
It could be based on vision, lidar, or even the ability to reach out slowly with a hand and sense contact or lack thereof.
If a robot has visibility of a sphere around it in workspace, and it is quasi-static, then the problem of safe movement is simple: the robot must just move in small enough steps that it never exits the sphere it saw from its previous configuration.
A great deal of work has addressed exploration problems in this visibility model.
However, many robots have less encompassing sensing.
Figure~\ref{sensors} illustrates several different sensor configurations for a simple planar robot.
Case (a) reflects the most common assumption about sensing: that is, that the robot can perceive a ball of some radius in the workspace (although this is often described as perceiving a ball in configuration space, which is not necessarily sensible); case (b) shows a wide field of view as might occur with some steerable sensors; case (c) shows a narrow view as might occur in some vision sensors; case (d) occurs for many humanoid robots with a camera mounted on the head: although they can see a view cone in front of them, it is occluded by the body and so there is a region of space immediately in front of the robot that cannot be seen; and case (e) illustrates a situation in which, for example, a humanoid is carrying a large box in front of it, so its field of view is split into two narrow cones.
Our approach handles a general mapping from robot configurations to ``viewed'' regions of workspace, encompassing these examples, and even more complex ones, including cameras mounted on the arms of a mobile manipulator.

\begin{figure}
	\begin{center}
		\includegraphics[width=1.0\textwidth]{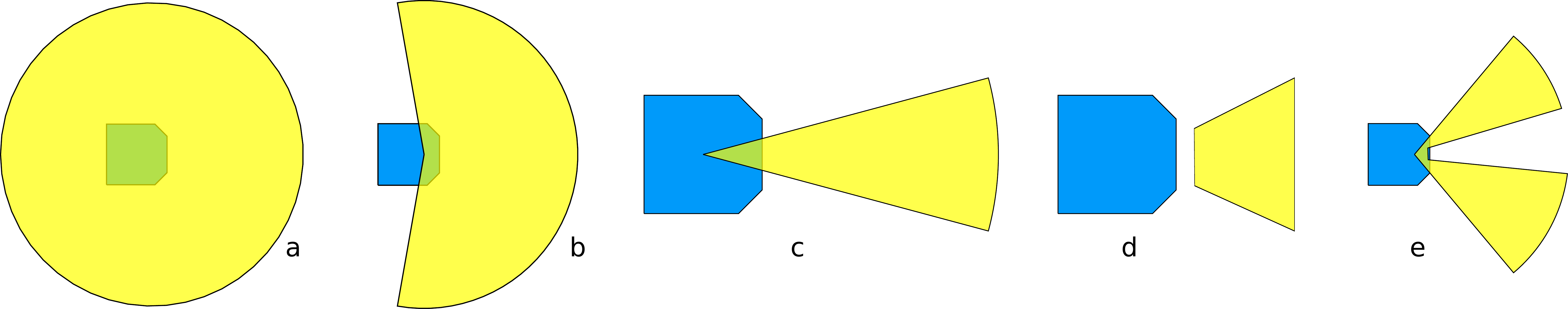}
	\end{center}
	\caption{Some possible sensed volumes with respect to robot configuration.}
	\label{sensors}






\end{figure}

\begin{figure}
	\begin{center}
		\subfloat 
		{
			\includegraphics[width=0.25\textwidth]{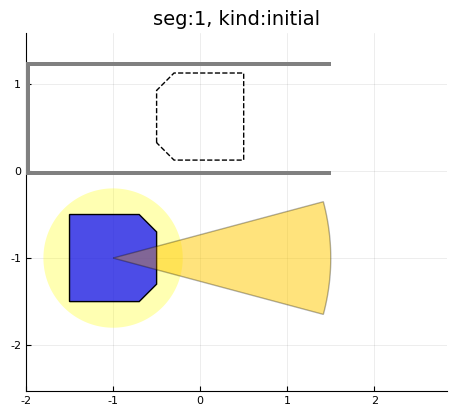}
		}
		\subfloat
		{
			\includegraphics[width=0.25\textwidth]{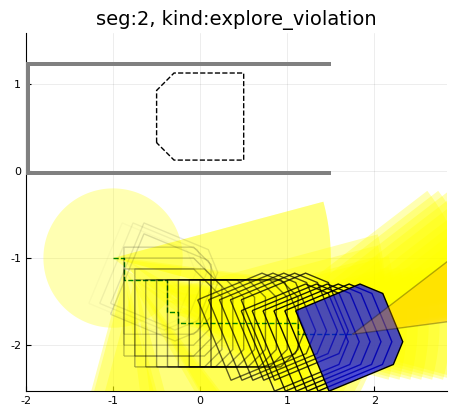}
		}
		\subfloat
		{
			\includegraphics[width=0.25\textwidth]{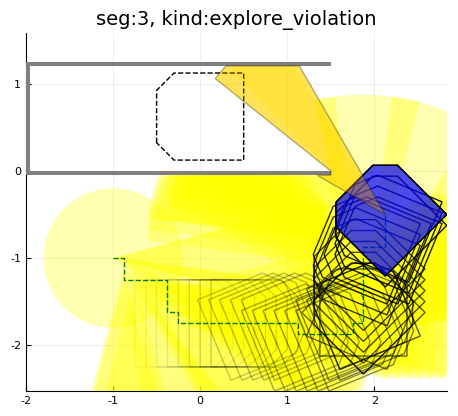}
		}
		\subfloat
		{
			\includegraphics[width=0.25\textwidth]{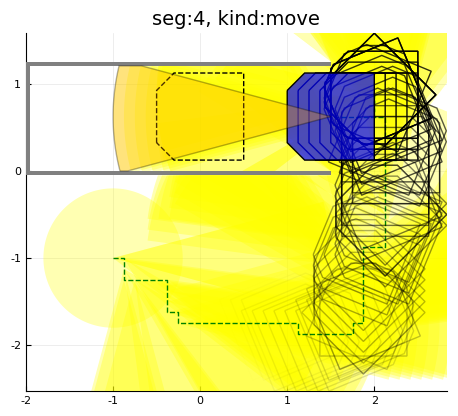}
		}
	\end{center}
	\caption{In the {\sc HallwayEasy} domain, a robot with narrow (\ang{30}) field of view must steer carefully around a corner.}
	\label{narrowCorner}
\end{figure}

\begin{figure}
	\begin{center}
		\subfloat
		{
			\includegraphics[width=0.25\textwidth]{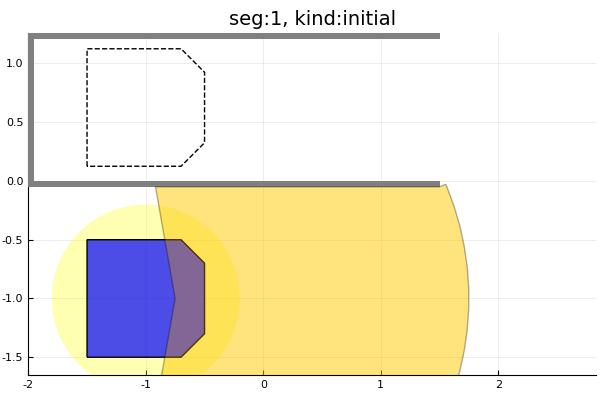}
		}
		\subfloat
		{
			\includegraphics[width=0.25\textwidth]{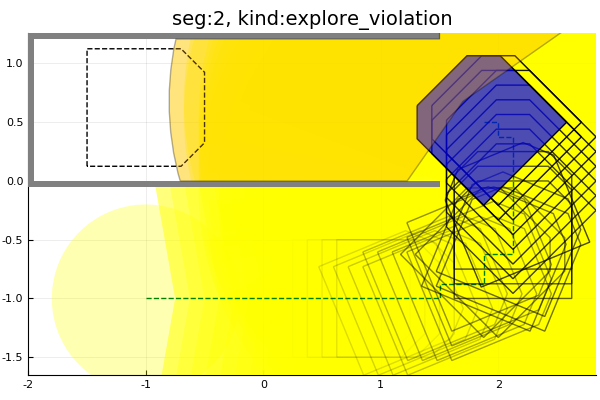}
		}
		\subfloat
		{
			\includegraphics[width=0.25\textwidth]{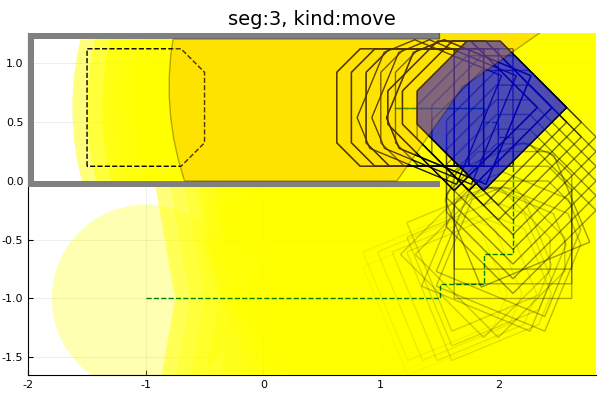}
		}
		\subfloat
		{
			\includegraphics[width=0.25\textwidth]{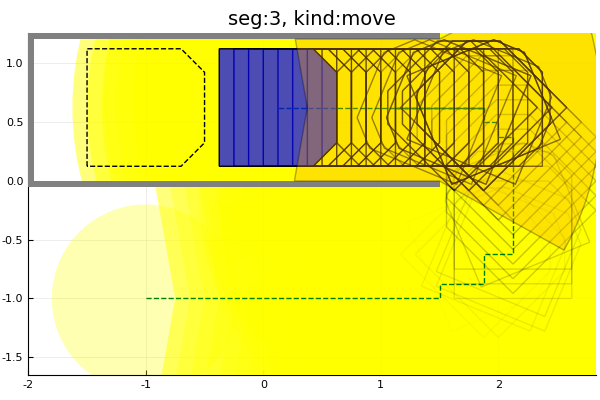}
		}
	\end{center}
	\caption{In the {\sc HallwayHard} domain, a robot with a wide field of view, must enter the hallway camera-first, back out, re-orient, and back in.}
	\label{backup}

\end{figure}

For a large robot with a limited view that is navigating in a cluttered environment, the problems of moving to the goal and of observing to guarantee safety are generally inextricably linked.
Figure~\ref{narrowCorner} shows a plan for a robot with a narrow view cone to enter a hallway.
The goal is depicted in the dashed outline.
The yellow shading indicates the region of workspace that has been seen by the plan.
Note that the robot has to rotate back and forth as it moves, and swing wide around the corner, in order to guarantee safety;  this path was obtained with the {\sc Vamp\_Backchain} method, described in section~\ref{sec:backchain}.
Figure~\ref{backup} shows a robot with a wider field of view that must enter a narrow hallway backwards.
Because it cannot turn around inside the hallway, it must first look down the hallway then back out and turn around.
Finally, figure~\ref{subfig:two-hallway-subgoals} shows a particularly difficult environment, in which the robot must enter one hallway to look through a narrow opening to gain visibility of a different hallway before entering it.

The contributions of this paper are twofold.
First, we clearly frame the problem of planning a path that is safe, in the sense that it never moves through previously unobserved space, for a general visibility function that maps robot configurations to regions of workspace that are observed.
We call this problem class \vamp{}, for {\it visibility-aware motion planning}.
Second, we supply several algorithms that are complete for this problem, which occupy different points in the trade-off space between path length and planning time.
In particular, we present in section~\ref{ssec:vampalg} an algorithm that is well-suited to solving \vamp{} problems.
The other algorithms we present either provide conceptual clarity to the belief-space nature of the \vamp{} problem, or serve as useful subprocedures for the algorithm in section~\ref{ssec:vampalg}.

The problem is quite computationally complex, because the state of the planning problem includes what space has already been observed, and so the solutions include paths that a traditional motion-planner would never generate, because the solution paths revisit the same robot configuration with different visibility states.
Although the examples presented here are for a robot with 3D configuration space in a 2D workspace, there is nothing in the formulation or algorithms that restricts their applicability to this case.
Our algorithms are provably always correct (will not generate an illegal plan); they are complete for holonomic robots, as well as for non-holonomic robots that can always reverse a path they have traveled, without increasing the total swept volume.

\section{Related work}

There is a body of work, related to ours, that addresses the problem of robot motion planning in the presence of uncertainty about the environment.
This work varies considerably in its objectives and assumptions about the problem.
We will lay out the space of approaches and situate our work within it.


\paragraph{Problem variations:}
There are many possible objectives for the robot's motion.
{\it Coverage} problems generally assume a known map of obstacles but require planning a path that will observe all parts of the reachable space, for example, to inspect a structure~\cite{galceran2013survey,davis2016c-opt,englot2012sampling}.
{\it Exploration} problems generally assume no prior knowledge of the obstacles and desire that the robot eventually observe all parts of the space, possibly while building a map of the obstacles~\cite{yamauchi1997frontier,heng2015efficient,oriolo2004srt,dornhege2013frontier,bircher2016receding,bekris2007greedy,lauri2016planning}.
{\it Navigation} problems seek to reach a specified goal region in an incompletely known environment~\cite{Janson-RSS-18,arvanitakis2018synergistic,richter2018bayesian}.

An additional source of variation is the notion of {\it safety}:  one might wish to guarantee safety (non-collision with any obstacle) absolutely, e.g.~\cite{englot2012sampling,bircher2016receding}, or with high probability with respect to a distribution over obstacles, e.g.~\cite{axelrod2018provably,richter2018bayesian} and with respect to obstacles that may move, e.g.~\cite{bouraine2012provably}.

Formulations vary in their assumptions about observations.
Sometimes no observations are assumed during path execution, as in coverage problems~\cite{galceran2013survey}.
In other cases, an observation is made from some or all states of the robot during execution; the observation depends on the underlying true world map as well as on the robot's state and may be an arbitrary function of those inputs.
Typically, observations are assumed to be in the plane and take the form of either a fixed cone or a circle centered on the robot's location, although more general 3D observations have been considered~\cite{dornhege2013frontier}.
In addition, the robot is typically assumed to be small relative to the obstacles and the environment uncrowded, so that the robot can be approximated as a point.
These simplifying assumptions blur the distinction between workspace and configuration space and limit the application of these algorithms to more general settings, such as those arising in mobile manipulation.

Robot motion is typically assumed to be planar.
However, during mobile manipulation, all the degrees of freedom of the robot may affect observations, e.g. the arms may partially block the sensor; this setting motivates our work although our experiments are in lower dimensions.
Previous work has considered a variety of motion models: kinematic, whether holonomic or non-holonomic~\cite{bekris2007greedy}, or kino-dynamic~\cite{Janson-RSS-18,bry2011rapidly}, and with deterministic or noisy actuation~\cite{lauri2016planning,arvanitakis2018synergistic}.
Robot dynamics introduce additional difficulty because the robot is not able to stop instantly when an obstacle is detected; instead, it must ensure that it never enters an {\it inevitable collision state}({\sc ics})~\cite{Janson-RSS-18,bekris2007greedy} with respect to its current known free space and a motion model of possible obstacles.

It is worth highlighting a harder case of the coverage problem, known as visibility-based pursuit-evasion~\cite{pursuit_evasion_limited_fov2006, pursuit_evasion_optimal2017}.
A solution to this problem is a path, which the pursuer follows, that guarantees that any evaders will be detected.
Solution paths to this problem necessarily cover the whole space,
In general the paths must revisit the same robot configuration with different information states.


\paragraph{Solution strategies:}
When there is no map uncertainty, as in coverage problems, then the solution is a (minimum-length) path that covers all parts of the space with its sensors.


Problems with map uncertainty, as in our case, can all be cast as some version of a partially observed Markov decision process ({\sc pomdp}).
For our version of the problem, the {\it state space} would be the cross product of the robot's configuration space with the space of all possible arrangements of free/occupied space defined by the unknown obstacles, the actions would be finite linear motions in configuration space, and the observations would be determined by the visibility function of the sensor.
The objective would be to minimize path length, but with the hard constraint of not moving into non-viewed areas (making this not a standard {\sc pomdp}).
Seeing the problem this way is often clarifying but it does not immediately lead to a solution strategy, since the optimal solution of {\sc pomdp}s is highly computationally intractable even in discrete state and action spaces.
\ifcanbeverbose
If it were possible to solve the {\sc pomdp} optimally, the result would be a policy that maps the robot configuration and a distribution over (or set of) possible maps consistent with the history of observations into actions.
Such a policy would be clearly intractable to compute or even represent.
\fi

Practical approximation strategies for {\sc pomdp}s almost all rely on some form of receding-horizon control~\cite{bircher2016receding}.
The system makes a plan for a sub-problem under some assumptions, and begins executing it, gathering further information about the map as it goes.
When it receives an observation that invalidates its plan (e.g. an object in its path) or reaches its subgoal, the system makes a new plan based on the current robot configuration and map state.

When the objective is exploration, a typical strategy is some form of {\it frontier-based} or {\it next-best view planning} method~\cite{yamauchi1997frontier,dornhege2013frontier,bekris2007greedy}.
On each replanning iteration, a subgoal configuration is selected (a) that is reachable within known free space from the robot's current configuration and (b) from which some previously-unobserved parts of the workspace can be viewed.
\ifcanbeverbose
A motion planner appropriate for the robot's dynamics is used to plan a path to that subgoal configuration, possibly with an additional objective of viewing as much unobserved workspace as possible along the way;  the robot executes the path, gathering information, and then replans.
The most important question in these methods is how to select subgoals.
Even if the information utility is submodular, a greedy strategy may be significantly sub-optimal in terms of robot motion, as it can cause the robot to move back and forth between distant subgoals.
\fi


When the objective is navigation, a typical replanning strategy is to be optimistic, planning a path to the goal that makes some assumptions about the true map and replanning if that assumption is invalidated.
Our contribution addresses the planning component of this approach.


\paragraph{Our work:}
We address a version of the problem in which we assume that the robot knows a map in advance.
This map is assumed to be accurate in the sense that the obstacles it contains are definitely present in the world.
\ifcanbeverbose
However, we wish to guarantee the robot's safety with respect to additional obstacles (furniture, trash, etc.) that may also be present in the world, but not in the map.
\fi

We assume that the robot has some form of obstacle sensor, but make no assumptions about it except that, for any configuration of the robot, it can observe some (possibly disjoint) subset of the workspace and that this visibility function is known in advance.
We assume that observation and control are deterministic and the robot always knows its configuration.
The algorithms in this paper assume a robot that can reverse a path without sweeping through additional workspace: this is true of holonomic robots, but also round differential-drive robots, for example.

The planner we present in section~\ref{ssec:vampalg} would be used in a ``trust but verify'' replanning framework, in which we assume, optimistically, for the purposes of planning, that the obstacles in our current map are, in fact, the only obstacles.
This assumption makes it worthwhile to try to plan a complete path to the goal.
However, because we are not certain that these are the only obstacles and because we wish to guarantee the robot's safety, we will seek a {\it visibility-aware} path to the goal, in which the robot never moves into space that has not been verified to be free during some previous part of its path (we formalize this criterion more carefully in section~\ref{sec:definition}).
The robot could then execute this path until it observes an obstacle that invalidates the path.
At that point, it would insert that obstacle into its map and re-plan.
The focus of this paper is on methods for planning optimistic visibility-aware trajectories.

We provide several planning algorithms that take a sampled feasible-motion graph as input (e.g. a PRM, or state lattice).
They are guaranteed to be correct and complete on the given graph, and may be {\it resolution-complete} or {\it probabilistically complete} depending on the strategy used to augment the samples in the graph in case of failure.
We do not yet make a claim about the completeness of the entire receding-horizon control policy.

\section{Definitions}

\label{sec:definition}
We will focus on problems in which the robot is holonomic or reversible, so the state of the robot can be modeled only in terms of its configuration.
We will use $\powerset{X}$ to denote the {\it powerset} of set $X$ (the set of all possible subsets.)

\newcommand{\cfree}{C_\text{free}}
\newcommand{\wofree}{W_\text{ofree}}
\newcommand{\wobs}{W_\text{obs}}
\newcommand{\qgoal}{Q_\text{goal}}
\newcommand{\eo}{{E_\text{ofree}}}

A \vamp{} problem instance is a tuple $\left(W, C, V, \wobs, q_0, \qgoal, v_0 \right)$ where: $W = R^2$ or $R^3$ is the workspace; $C$ is the configuration space of the robot; $V : C \rightarrow \powerset{W}$ is a {\it visibility function}, mapping robot configurations into (possibly disconnected) subsets of workspace that are visible from that configuration, conditioned on the known obstacles;  $S: C \rightarrow \powerset{W}$ is a {\it swept volume function}, mapping robot configurations into subsets of workspace occupied by the robot in that configuration; $\wobs{} \in \powerset{W}$ is the subset of workspace that is known to contain obstacles; $q_0 \in C$ is the initial robot configuration; $\qgoal : C \rightarrow \text{Bool}$ is a function from configurations to Booleans indicating whether the configuration satisfies the goal criteria; and $v_0 \subseteq W$ is a region of workspace that has already been viewed and confirmed free (by default, it will be equal to $V(q_0)$, but for some robots it will need to be more, in order to allow any initial movement.)
We are assuming that motion is continuous-time (so all configurations along a path must be previously viewed) but that perception is discrete-time (so new views are only gained at the end of each primitive trajectory).

We will define some additional useful quantities, in terms of the basic elements of a \vamp{} problem instance.
We extend the definition of swept volume to a path segment, so $S\left(q_{i},q_{j}\right)\subseteq W$ is the region swept by moving from $q_{i}$ to $q_{j}$ via a primitive trajectory (such as straight line in $C$-space).
We further extend the definitions of $S$ and $V$ to paths: $S\left(\left[q_{1}\cdots,q_{n}\right]\right)=\bigcup_{i=1\cdots n-1}S\left(q_{i},q_{i+1}\right)$ and $V\left(\left[q_{1}\cdots,q_{n}\right]\right)=\bigcup_{i=1\cdots n}V\left(q_{i}\right)$.

An edge between configurations $q_i$ and $q_j$ is {\it optimistically traversible} if its swept volume does not intersect a workspace obstacle; the set of optimistic edges, then is $\eo{} = \{ (q_1, q_2) \mid S(q_1, q_2) \cap \wobs = \emptyset\}$.
A path $\left[q_{1},\cdots q_{n}\right]$ is {\it feasible} for the problem if and only if every edge is optimistically traversible and only moves through parts of the workspace that have been previously viewed: $(q_{i}, q_{i+1}) \in \eo{}$ and $S\left(q_{i}, q_{i+1}\right)\subseteq v_0 \cup V\left(\left[q_{1},\cdots,q_{i}\right]\right)$, for all $i \in \{0, \ldots, n-1\}$.
We refer to this second condition as the {\it visibility constraint}.



\section{Planning algorithms}

We present several algorithmic approaches to the \vamp{} problem, beginning with a computationally inefficient method that produces very high quality plans, and then exploring alternative strategies that are more computationally tractable.

In all these algorithms, we assume a given finite graph $(Q, E)$ where $Q \subset C$ is a set of configurations and all edges $(q_1, q_2) \in E$ are collision-free with respect to $\wobs$, so $S(q_1, q_2) \cap \wobs = \emptyset$.
This graph may be, for example, a fixed-resolution grid or a probabilistic road-map~\cite{Kavraki96}.
Any of our \vamp{} algorithms can be augmented by an outer loop that increases the resolution or sampling density of the graph.

\subsection{Belief-space search}

The most conceptually straightforward approach to this problem is to perform a tree-search in {\it belief space}.
Roy et al. \cite{prentice2009belief,bry2011rapidly} have pioneered techniques of this type in uncertain robot motion planning problems.
The basic idea is that a state of the whole system consists of a robot configuration and a current belief state, with the robot configurations drawn from the set $Q$.
In this problem, the belief state $v \in \powerset W$ is the region of the workspace that has been observed by the robot to be collision-free.

\begin{algorithm}
	\caption{\textproc{Vamp\_Bel}$((Q, E), V, q_0, \qgoal, v_0$)}
	\label{bsAlg}
	\begin{algorithmic}
		\State $s_0 \gets (q_0, v_0)$ \Comment initial state
		\State $g((q, v)) \gets \qgoal(q)$ \Comment goal test
		\State $A((q, v)) \gets \{q' \mid (q, q') \in E \;\;\text{and}\;\;S(q, q') \subset v \}$ \Comment legal actions in state $(q, v)$
		\State $T((q, v), q') \gets (q', v \cup V(q'))$ \Comment state transition function
		\State $H((q, v)) \gets \alpha \lvert S(\text{\sc mp}((Q, E), q, \qgoal)) \setminus v \rvert$ \Comment heuristic
		\State \Return $A^*(s_0, g, A, T, H)$
	\end{algorithmic}
\end{algorithm}

Procedure {\sc Vamp\_Bel} provides an implementation of the belief-space search via a call to $A^*$.
The procedure is given, as input, the graph $(Q, E)$, visibility function $V$, initial configuration $q_0$, goal test $\qgoal{}$, and initial visible workspace $v_0$.
The set of legal actions $A$ that can be taken from state $(q, v)$ is the set of outgoing edges from configuration $q$ that have the property that their swept volume is contained in the previously-viewed region of the workspace $v$.
The transition function $T$  moves along the edge to a new configuration and augments $v$ with the region of configuration space visible from the new configuration.
In order to drive the search toward a goal state, we define a heuristic which is based on a visibility-unaware path $\text{\sc mp}((Q, E), q, \qgoal)$ obtained by solving the underlying motion-planning problem to the goal.
The size of the swept volume of that path that has not yet been viewed is used as a measure of the difficulty of the remaining problem; $\alpha$ is a constant that makes the units match.
Note that, in this search, it is possible for an optimal path to visit the same configuration more than once (with different visibility states $v$).
Nonetheless, the search space is finite given finite $Q$, because only finitely many possible visibility states can be reached (at most one for each {\it set} of configurations in $Q$).

\begin{theorem}
  Algorithm {\sc Vamp\_Bel} is correct and complete with respect to
  configuration space graph $(Q, E)$.
\end{theorem}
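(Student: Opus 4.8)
The plan is to view {\sc Vamp\_Bel} as a systematic graph search over the finite space of \emph{belief states} induced by $(Q,E)$, and to split the claim into \emph{soundness} (any plan the algorithm returns is executable and ends at a goal belief) and \emph{completeness} (if any executable plan exists, the algorithm returns one). The first step is to fix the belief representation: a belief couples a configuration $q\in Q$ with the collection of edges (equivalently, swept workspace regions) already \emph{observed} to lie in $\cfree$, a member of $\powerset{E}$. Since $Q$ and $E$ are finite, only finitely many beliefs are reachable, so the induced belief-space graph — whose transitions are ``move along a known-free edge'' and ``look from the current configuration'' — is finite. This finiteness, together with duplicate detection on a closed list, is what makes a breadth-first (or uniform-cost) expansion terminate.

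For soundness I would carry the loop invariant that every belief placed on the frontier is genuinely \emph{reachable}: there is a finite sequence of look/move actions — each legal, meaning a move is taken only along an edge already certified collision-free by an earlier look — that drives the robot from the start configuration and the empty observation set to that belief. The invariant holds at initialization and is preserved by expansion, because the successor generator applies exactly the effect of one legal action. When the goal test fires — the current configuration lies in $\qgoal$ and the belief-level goal condition holds — the sequence of action labels along the tree path from the root \emph{is} such a legal sequence, i.e.\ an executable visibility-aware plan, so any returned plan is correct.

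For completeness I would prove the converse containment: the belief-space graph contains every executable plan. Given an arbitrary executable plan over $(Q,E)$, each of its steps is by definition either a look or a move along an edge the accumulated observations have already shown to be in $\cfree$, hence a transition of the belief-space graph; therefore the plan is a path in that graph from the start belief to a goal belief. A systematic search of a finite graph necessarily discovers a path to every reachable goal node, and termination is already in hand, so completeness — relative to the graph $(Q,E)$, exactly as stated — follows.

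The step I expect to be the main obstacle is the bookkeeping around the \emph{observation model}: one must show that the set of edges certified free is computed correctly from the current configuration and that it behaves \emph{monotonically} under further looks — once an edge is known free it stays known free — so that the belief-space transition relation is well defined and the ``legal move'' predicate used in the soundness invariant coincides with the check the algorithm actually performs. A secondary subtlety is verifying that the $\powerset{E}$-valued component, although worst-case exponential, is nonetheless finite, and that the algorithm never branches on observations it has not actually made, which is precisely what prevents ``wishful'' plans and keeps soundness intact.
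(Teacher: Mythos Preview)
Your argument is essentially the paper's: correctness holds because the successor (action) function only permits moves through space already certified visible, so every returned path is feasible and ends at a goal; completeness holds because the belief space is finite, no feasible transition is excluded, and the underlying search (the paper uses $A^*$ rather than generic BFS/UCS) is complete. The paper's proof is a three-line sketch and models the belief as $(q,v)$ with $v$ the accumulated visible workspace region---visibility accrues automatically as the robot moves, with no separate ``look'' action---whereas you use $(q,\text{subset of }E)$ with explicit look/move transitions; this is a harmless modeling variation, and your monotonicity concern is exactly what makes the action function $A$ in the paper well defined.
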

\begin{proof}
  \ifForWAFR
  \proofInSupp
  \else
  It is correct, because if it returns a path, that path is
  a feasible path to a goal state.  The $A$
  function only allows the robot to move through space that has
  already been made visible along the path, so the steps are all
  feasible, and $A^*$ ensures that the final configuration satisfies
  the goal test.  It is complete, because the search space is finite,
  no feasible actions are ever disallowed, and $A^*$ is complete.
  \qed
  \fi
\end{proof}

This algorithm is computationally very complex even on modest graphs because the search must consider distinct paths that reach a given robot configuration.
The search can be pruned by using a {\it domination criterion}:  state $(q, v_1)$ dominates $(q, v_2)$ if $v_1 \subseteq v_2$, which means that if the search visits a state that is dominated by a state that has already been expanded, it can discard the dominated state.
In our experiments, this condition did not occur frequently enough to be useful; different paths will see slightly different regions.
On the other hand, in the setting of Bry and Roy~\cite{bry2011rapidly}, the belief space is lower dimensional (covariance matrices of the dynamical state space) and so domination happens much more frequently and makes the search tractable.

We implemented this algorithm with a very computationally cheap domination criterion that eliminates paths that revisit configurations without having visited any new configurations since the last visit (this eliminates looping paths, among others).
For {\sc HallwayEasy}, figure~\ref{narrowCorner}, the heuristic is very effective at guiding the search---a solution is found in under \SI{10}{\sec} after expanding 500 search nodes.
However, on {\sc HallwayHard}, figure~\ref{backup}, no solution was found after expanding 440K nodes, with 2 million nodes on the queue, with a computation time of over \SI{2}{\hour}.

\subsection{Local-visibility searches}

At the opposite end of the spectrum of approaches to the \vamp{} problem are methods that perform search directly in configuration space, as opposed to the problem's natural state space, which must include information about regions of observed space.
These local approaches are not complete in general, but may be complete for some robots;  they will prove useful as a subroutine in later algorithms.
{\sc Vamp\_Step\_Vis} is defined in algorithm~\ref{alg:step_vis}.
The basic version of the method has the same arguments as {\sc Vamp\_Bel}, but it may also be used in {\it relaxed} mode, which is signaled by parameter $\text{relaxed} = \text{true}$, and makes use of an additional argument $O \subset W$, which is a workspace region considered to be {\it out of bounds}.
In any mode, it may optionally be given a heuristic function.

\begin{algorithm}
	\caption{\textproc{Vamp\_Step\_Vis}$((Q, E), V, q_0, \qgoal, v_0, H = 0, \text{relaxed} = \text{False}, O = \emptyset)$}
	\label{alg:step_vis}
	\begin{algorithmic}
		\State $s_0 \gets q_0$ \Comment initial state
		\If{relaxed} {}  \Comment legal actions and cost
		\State $A(q) \gets \{q' \mid (q, q') \in E  \;\;\text{and}\;\;S(q, q') \cap O = \emptyset\} $
		\State $C(q, q') \gets  \lVert q - q' \rVert_2 \cdot
			\left(\text{{\bf if} $S(q, q') \subseteq (v_0
						\cup V(q))$ {\bf then} $1$ {\bf else}
					$\lvert S(q,q') \setminus (v_0 \cup V(q))\rvert$}\right)$
		\Else
		\State $A(q) \gets \{q' \mid (q, q') \in E \;\;\text{and}\;\;S(q, q') \subseteq (v_0 \cup V(q))\} $
		\State $C(q, q') \gets \lVert q - q' \rVert_2$
		\EndIf
		\State $T(q, q') \gets q'$ \Comment state transition function
		\State \Return $A^*(s_0, \qgoal, A, T, H, C)$
	\end{algorithmic}
\end{algorithm}

When it is not relaxed, it allows traversal of any edge $(q, q') \in E$ whose swept volume is entirely contained in the union of the initial visibility space $v_0$ and the region of workspace visible from $q$, $V(q)$.
For some combinations of robot kinematics and visibility, this algorithm will be complete.
For example, a robot with a wide field of view will always be able to see the space it is about to move into.
However, this method does not suffice for robots that can move into space that is not immediately visible to them.
Relaxed mode is used only to compute intermediate subgoals, but never for executable paths; in it, the robot is allowed to move into areas of the workspace that have not yet been seen, but these motions incur an extra cost.
It is not, however, allowed to collide with the out-of-bounds region under any circumstance, a feature used in the {\sc Tourist} algorithm of section~\ref{ssec:vampalg}.
Ideally, the relaxed planner would solve a {\it minimum-constraint removal} problem~\cite{hauser2014minimum}, keeping track of regions that have ever been violated, and not double-counting the regions that experience repeat violations.
This is a very computationally difficult sub-problem, so we simply penalize total distance traversed through unviewed regions.

An very useful extension of {\sc Vamp\_Step\_Vis} is the algorithm {\sc Vamp\_Path\_Vis}.
It is, fundamentally, a search in the space of configurations, as in {\sc Vamp\_Step\_Vis}, and so for example the tests for whether a node has been visited inside $A^*$ are made on the configuration alone.
However, as in {\sc Vamp\_Bel}, we ``decorate'' each node that is added to the tree with the visibility $v$ computed along the path to that node.
However, whatever visibility we have the first time we expand a configuration $q$ is the visibility that will be permanently associated with it.
This method can solve more problem instances than {\sc Vamp\_Step\_Vis}, and is always correct, but it is incomplete (because it might commit to a path to some configuration that is not the best in terms of visibility, and it cannot contemplate paths that must revisit the same configuration).
We will use it extensively as a subroutine in our final algorithm in section~\ref{ssec:vampalg}

\begin{figure} \begin{center} \subfloat[] { \includegraphics[width=0.20\textwidth]{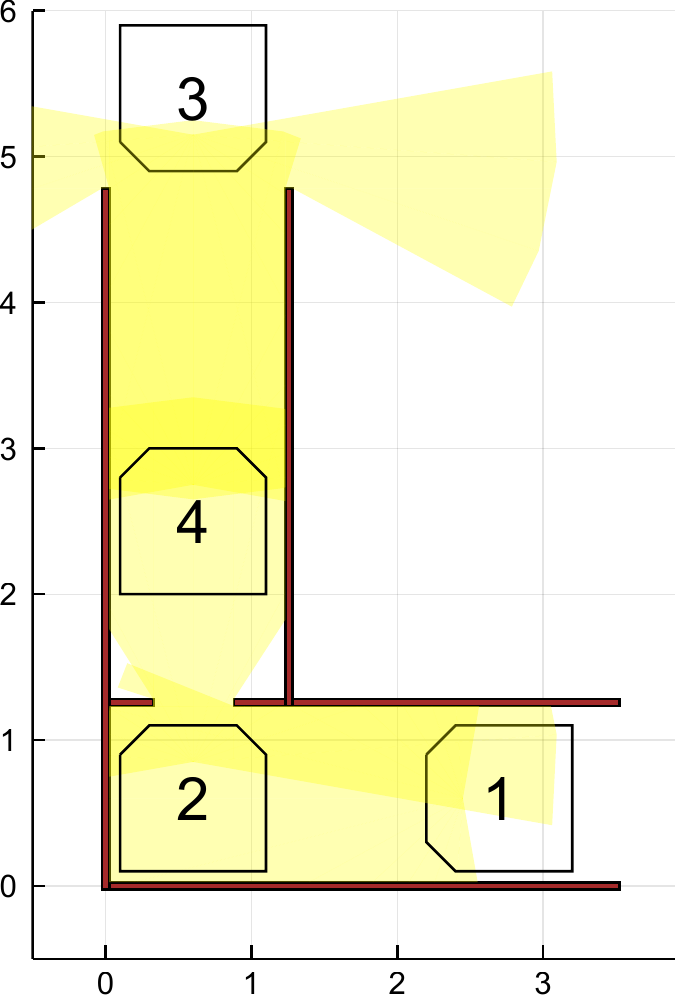} \label{subfig:two-hallway-subgoals} } \subfloat[] { \includegraphics[width=0.25\textwidth]{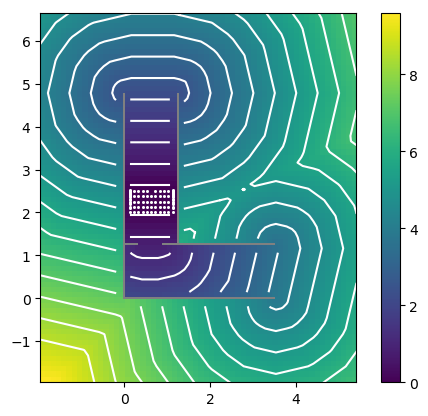} \label{subfig:achieve-vis-heuristic} } \subfloat[] { \includegraphics[width=0.25\textwidth]{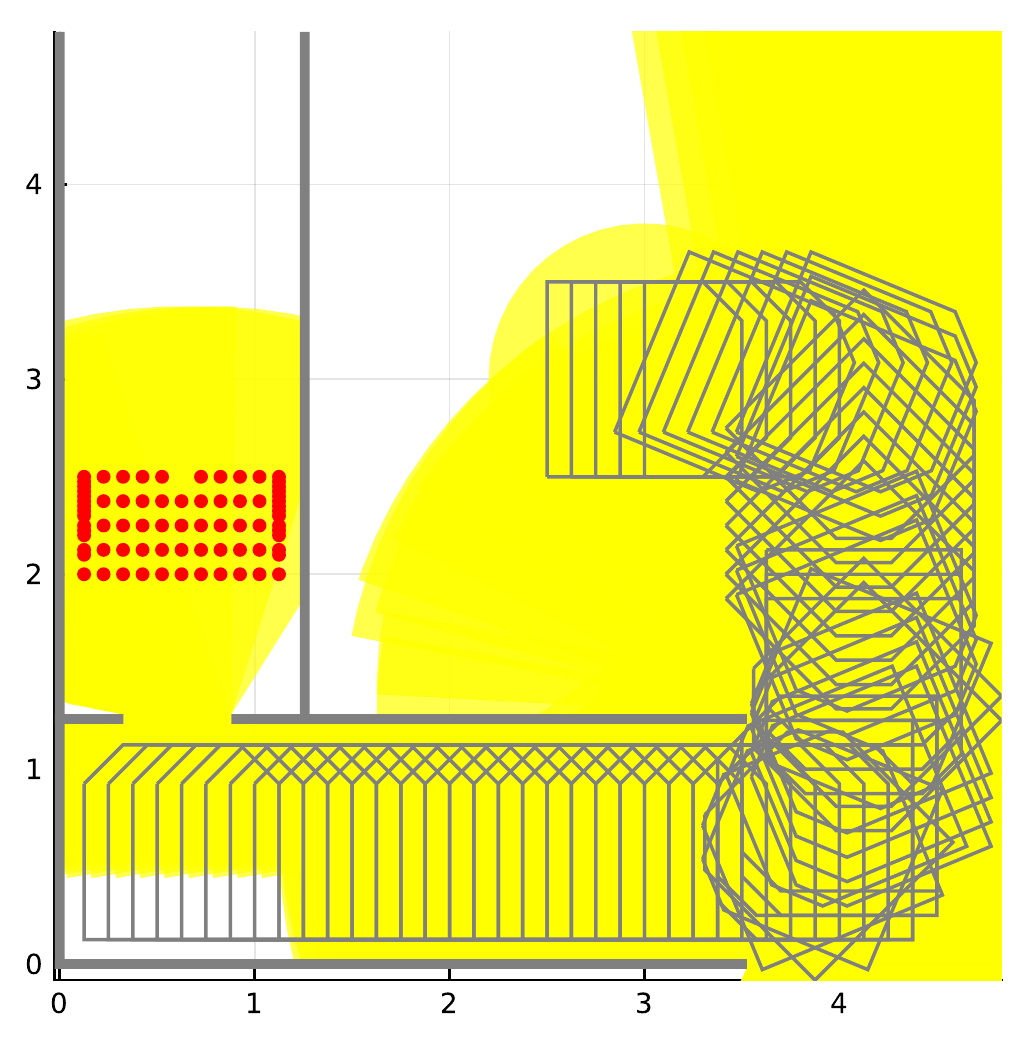} \label{subfig:goal_regression1} } \subfloat[] { \includegraphics[width=0.25\textwidth]{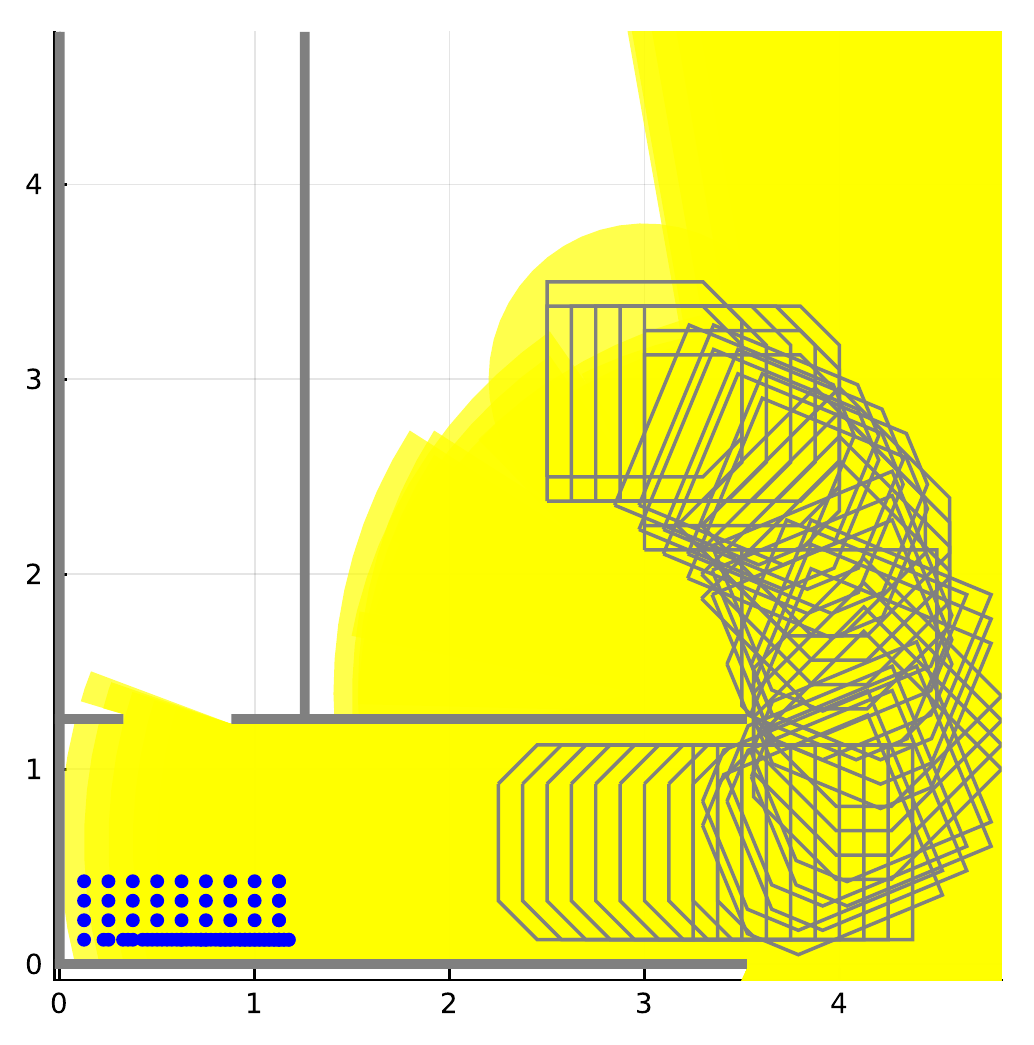} \label{subfig:goal_regression2} } \end{center}

	\caption{(a) The {\sc TwoHallway} domain with some hand-generated subgoals.
		(b) Level-sets and heat-map of field $F$ used to compute heuristic for {\sc Tourist} algorithm. (c, d) Two levels of {\sc Vavp}. }
	\label{fig:two-hallway-subgoals}
	\label{fig:regression}
\end{figure}

\subsection{Tree-visibility tour}

One observation is that, under the \vamp{} assumptions, visibility is monotonic:  that is, as the robot moves through the world, after any discrete path step and observation, $v_{t-1} \subseteq v_t$.
In addition, it is order-independent: $V([q_0, \ldots, q_n]) = V(\text{perm}[q_0, \ldots, q_n])$ where perm is any permutation of the sequence of configurations.
These observations lead us to an algorithm that is complete and much more efficient at finding solution paths for \vamp{} problems than belief-space search, although we will find that it will generally be unsuitable in practice.

Rather than associating a new visibility region $v$ with each state in the search, we will maintain a single, global $v \in \powerset{W}$ and carry out a search directly in $Q$.
The search can only traverse an edge if its swept volume is contained {\it in the workspace that has been viewed during the entire search process up until that time}.
Once this process reaches a goal state, the tree, in the order it was constructed, is used to construct a solution path.
Pseudo-code is shown in~\ref{alg:treevis}.

\begin{algorithm}
	\caption{\textproc{Vamp\_Tree} $((Q, E), V, q_0, \qgoal, v_0)$}
	\label{alg:treevis}
	\begin{algorithmic}
		\State $\text{agenda} \gets [\text{$(q_0, q')$ for $(q_0, q') \in E$}]$
		\State $\text{visited} \gets [q_0] ; \;\;
			T \gets [\;] ; \;\;
			v \gets v_0$
		\While {agenda is not empty}
		\State $(q_s, q_e) \gets \text{pop}(\text{agenda})$
		\If {$q_e \in \text{visited}$} {\bf continue}
		\EndIf
		\If {$S(q_s, q_e) \subseteq v$}
		\State $\text{visited}.\text{append}(q_e)$ \Comment add conf to path
		\State $T.\text{append}((q_s, q_e))$ \Comment add edge to tree
		\If {$\qgoal(q_e)$} {\bf break}            \EndIf
		\State $v \gets v \cup V(q_e)$ \Comment add new visibility
		\State $\text{agenda}.\text{extend}([\text{$(q_e, q')$ for $(q_e, q') \in E$}])$ \Comment add outgoing edges to agenda
		\Else
		\State $\text{agenda}.\text{append}((q_s, q_e))$ \Comment save edge for reconsideration
		\EndIf
		\EndWhile
		\If {{\bf not} $\qgoal(q_e)$} \Return Failed
		\EndIf

		\State $p \gets [q_0]; \;\;
			q_\text{curr} \gets q_0$
		\For {$i \in [1..\text{len}(\text{visited})]$}
		\State $q_\text{next} \gets \text{visited}[i]$ \Comment link configurations using previously-enabled edges
		\State $p.\text{extend}(\text{shortest\_undirected\_path}(q_\text{curr}, q_\text{next}, T[0:i])[1:])$
		\State $q_\text{curr} \gets q_\text{next}$
		\EndFor
		\State \Return p
	\end{algorithmic}
\end{algorithm}

It proceeds in two phases.
First, it constructs a search tree,  where the extension from a point in the tree is made only within the region that has been visible from any configuration previously visited in the search.
Second, it constructs a path that visits all of the configurations, in the order in which they were added to the tree, and returns that path.
The tree search is slightly unusual, because which edges in the graph can be traversed depends globally on all search nodes in the tree.
For this reason, we perform a queue-based search, keeping an agenda of {\it edges}, rather than nodes.
If an edge is selected for expansion, but is not yet traversible, it is added back to the end of the agenda for reconsideration after some more of the tree has been grown.
When a goal state has been reached, we extract a path from the tree.
This path will visit the configurations in the same order that they were visited by the search, but they must be connected together via paths in the tree that existed at the point in the search when the configuration was visited.


\begin{theorem} {\sc Vamp\_Tree} is correct and complete with respect to
  the configuration-space graph $(Q, E)$ for any robot such that
  $S(q_1, q_2) = S(q_2, q_1)$ for all $q_1, q_2$.
\end{theorem}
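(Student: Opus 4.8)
The plan is to mirror the two–part structure used for {\sc Vamp\_Bel} — establish soundness (``correct'') from the invariants the search maintains, and establish completeness by reduction to a feasible path in $(Q,E)$ — with the symmetry hypothesis $S(q_1,q_2)=S(q_2,q_1)$ doing its real work only in the completeness half.

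\textbf{Soundness.} Each node $n$ of the tree carries a configuration $q$ together with the workspace region $W(n)\subseteq\wofree$ revealed along the unique root-to-$n$ path, and {\sc Vamp\_Tree} extends $n$ by an edge $(q,q')\in E$ only when $S(q,q')\subseteq W(n)$, setting $W(n')=W(n)\cup S(q,q')$. By induction on tree depth the root-to-$n$ path is then feasible and its revealed region is exactly $W(n)$; since a node is returned only when its configuration passes the goal test, any returned path is a feasible visibility-respecting path to $\qgoal$. This is essentially the {\sc Vamp\_Bel} argument verbatim and does not use symmetry.

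\textbf{Completeness.} Suppose a feasible path $\pi^{\ast}=q_0,\dots,q_k$ exists with $q_0=q_\text{start}$ and $q_k\in\qgoal$; let $W_0$ be the initially revealed region and $W_{i+1}=W_i\cup S(q_i,q_{i+1})$, so feasibility says $S(q_i,q_{i+1})\subseteq W_i$. I would prove by induction on $i$ that the search eventually creates a node at $q_i$ whose recorded region contains $W_i$. The base case is immediate; in the step, the inductive node $n_i$ at $q_i$ has $W(n_i)\supseteq W_i\supseteq S(q_i,q_{i+1})$, so the extension to $q_{i+1}$ is permitted and — the search being systematic over a finite space — is eventually performed, giving a node $n_{i+1}$ with $W(n_{i+1})\supseteq W(n_i)\cup S(q_i,q_{i+1})\supseteq W_{i+1}$. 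At $i=k$ this produces a node at a goal configuration, so the algorithm returns a solution, provided the search tree is finite.

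\textbf{The crux.} Finiteness of the tree — equivalently, that collapsing the belief graph (configurations paired with revealed regions) onto configurations alone loses no solutions — is the one place symmetry is essential, and is the main obstacle. The key observation is that under $S(q_1,q_2)=S(q_2,q_1)$, once the robot has reached $q$ with revealed region $W$, it can always step to any neighbour $q'$ with $S(q,q')\subseteq W$ and step back, since the return edge requires $S(q',q)=S(q,q')\subseteq W$, which already holds; hence the family of revealed regions attainable while at a fixed configuration $q$ is directed, has a maximum $W^{\ast}(q)$, and $W^{\ast}(q)$ is attained along a loop-free path. I would make this precise by taking any feasible path that revisits a configuration $q$ and rewriting the intervening cycle as a sequence of ``out-and-back'' excursions along its edges, traversed in the order a spanning tree would visit them: symmetry is exactly what keeps each excursion feasible and shows it yields the same revealed region, so the cycle can be excised in favour of tree structure while the remainder of the path stays feasible. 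Consequently the search never needs to re-expand a configuration with a strictly larger region, the tree is finite, and the completeness induction above goes through; everything else is routine bookkeeping on top of the {\sc Vamp\_Bel} proof.
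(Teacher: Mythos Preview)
Your proposal rests on a model of {\sc Vamp\_Tree} that does not match the algorithm the paper actually proves. You treat it as a belief-space tree search in which each node $n$ carries its own revealed region $W(n)$ along the root-to-$n$ path; but {\sc Vamp\_Tree} is precisely the algorithm that \emph{avoids} per-node belief. The tree $T$ is over configurations only, and the feasibility test for extending from $q_i$ uses the \emph{global} visibility $V(\text{visited}[0..i-1])\cup v_0$ accumulated over everything visited so far, in the order visited. The algorithm then has a second, path-construction phase: it walks the {\it visited} list and, for each consecutive pair $(\text{visited}[i-1],\text{visited}[i])$, splices in a shortest undirected path in $T[0{:}i]$ between them. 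Neither phase appears in your write-up.

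Because of this, two of your central claims are inverted. First, your ``crux'' --- that symmetry is needed to bound the size of a (configuration, region) tree --- is a non-problem: the actual tree lives in $Q$, which is finite, so finiteness is immediate and needs no symmetry. Second, and more seriously, you assert that soundness ``does not use symmetry,'' but in the real algorithm it does. The returned path $p$ traverses edges of $T$ in \emph{either} direction (the undirected shortest paths can go ``up'' toward the root and back ``down''), so an edge $(q_1,q_2)$ in $p$ may have been inserted into $T$ as $(q_2,q_1)$. All the tree guarantees is $S(q_2,q_1)\subseteq V(\text{visited}[0..i-1])$; to conclude that the traversal $(q_1,q_2)$ is feasible you must invoke $S(q_1,q_2)=S(q_2,q_1)$. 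Likewise, completeness of the second phase hinges on reversibility: one must argue that a path from $q_{\text{curr}}$ up to $q_0$ and back down to $q_{\text{next}}$ always exists in the undirected tree, which again needs the symmetry hypothesis. Your completeness induction, as written, addresses only whether a goal configuration enters {\it visited}; it says nothing about whether the stitched path $p$ can be built or why it is feasible.
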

\begin{proof}
  \ifForWAFR
  \proofInSupp
  \else
  It is correct because, if it returns a path, it is a feasible path
  to a goal state.  The set of edges $(q_1, q_2)$ added to $p$ on
  iteration $i$ of the path-construction phase have the property that
  either $(q_1, q_2)$ or $(q_2, q_1)$ is in $T[0:i-1]$, which, by
  construction of the tree $T$ and the reversibility assumption in the
  theorem statement, means that
  $S(q_1, q_2) \subseteq V(\text{visited}[0..i-1])$.  This, in turn,
  implies that the path is feasible.  The last configuration in
  $\text{visited}$ clearly satisfies the goal test, and it is also the
  last configuration in the returned path $p$.

  To show that it is complete, we must show that if a feasible path to
  a goal state exists in $(Q, E)$, the search will find it (or another
  feasible path).  Assume $[q_0, \ldots, q_n]$ where
  $\qgoal(q_n) = \text{\it true}$ is a feasible path and assume, for
  the sake of contradiction, that the first {\bf while} loop cannot
  add all of the configurations $[q_0, \ldots, q_n]$ to {\it visited}.
  Then there must be a point in that loop when $[q_0, \ldots, q_i]$
  are in {\it visited} for some $0 \leq i < n$ but the algorithm
  cannot reach $q_{i+1}$.  We know by the assumption that this is a
  feasible path, so $(q_i, q_{i+1}) \in E$ and
  $S(q_i, q_{i+1}) \subseteq V([q_0, \ldots, q_{i}]) \cup v_0$, which means
  $q_{i+1}$ must be in $A(q_i)$.  We also know that $(q_i, q_{i+1})$
  will be in {\it agenda}, because $q_i$ is in {\it visited} so it was
  added, but $q_{i+1}$ is not in visited, so that edge has not been
  removed from the agenda.  But if $(q_i, q_{i+1})$ is in the agenda
  and $q_{i+1}$ is in $A(q_i)$, then $q_{i+1}$ can be added to {\it
    visited}, and so we reach a contradiction.  Thus, we have shown
  that after the {\bf while} loop, $q_n$ has been reached, and so the
  algorithm continues to the second phase.  The only possible failure
  mode of the second phase is if {\it shortest\_undirected\_path} fails;
  but, by construction, both $q_\text{curr} = \text{visited}[i-1]$ and
  $q_\text{next} = \text{visited}[i]$  are in $T[0:i]$, as are paths
  from $q_0$ to each of them.  Thus we know that there is, at worst, a
  path going from $q_\text{curr}$ up to $q_0$ and back down to
  $q_\text{next}$, by the reversibility assumption.  So this loop will
  terminate and a path $p$ will be returned.
  \qed
  \fi
\end{proof}

\subsection{Visibility preimage backchaining} \label{ssec:vampalg}

Our final approach to this problem is to perform a much more goal-driven search to observe parts of the workspace that will make desired paths feasible.
This algorithm is motivated by the observation that goals can be decomposed into subgoals.
Figure~\ref{subfig:two-hallway-subgoals} shows a goal configuration marked $4$.
To make this configuration visible, the robot must stand at $2$ and $3$.
Finally, to make $2$ visible, the robot must stand at $1$.
It is interesting to note that {\sc Vamp\_Path\_Vis} can plan efficiently to visit these subgoals in order, if they are provided.
With this motivation in mind, we describe a general algorithm that has several special cases of interest, described in section~\ref{sec:experiments}.

We make use of the {\sc Tourist} algorithm, whose goal is to see some part of a given previously-unobserved region of workspace.
It uses a local-visibility algorithm to find a path, but where the goal test for a configuration is that it is possible to observe some previously unobserved part of the workspace from there.
A critical aspect to making this search effective is to use a heuristic that drives progress toward the objective of observing part of a region of interest, $R$.
We begin by computing a scalar field, $F$, in workspace, of the shortest distance from location $x$ to a point in $R$.
Then, the heuristic is $H(q) = \min_{x \in V(q)} F(x)$, which assigns 0 heuristic value to a configuration that can see part of $R$ (because it will be able to see a workspace point $x$ with $F(x) = 0$) and increasingly higher heuristic values to configurations that can only see points that are "far" in the sense of $F$ from $R$.
Computing $F$ is relatively inexpensive, and it effectively models the fact that visibility does not go through walls.
This heuristic is illustrated in figure~\ref{subfig:achieve-vis-heuristic}: the black nodes are the workspace target region $R$.
The figure illustrates the level sets of $F$.

\begin{algorithm}
	\caption{\textproc{Tourist}$((Q, E), V, q_0, R, v_0, \text{relaxed} = \text{false}, O = \emptyset$)}
	\label{tourist}
	\begin{algorithmic}
		\State $H(q) = \min_{x \in V(q)}
			F(x)$ \Comment where $F$ is distance field \State \Return \Call{Vamp\_Path\_Vis}{$(Q, E), V, q_0, \lambda q.
				(V(q)\cap R) \not = \emptyset, v_0, H, \text{relaxed}, O$}
	\end{algorithmic}
\end{algorithm}

Now we can describe the {\sc Vamp\_Backchain} algorithm, with pseudo-code shown in algorithm~\ref{alg:backchain}.
The main loop of the algorithm is in lines~\ref{lst:line:vamp-start}--\ref{lst:line:vamp-end}.
It keeps track of $p$, the solution path it is constructing, $v$, the region of workspace that has been viewed by $p$, and $q$, the configuration at the end of $p$.
On every iteration, it checks to see whether a goal state is reachable from $q$ with the current visibility $v$.
If so, it appends the path that does so to $p$ and returns a final solution.
If that test fails, then it generates a path that is guaranteed to increase $v$ (if the problem is feasible), ideally in a way that makes it easier to reach a goal configuration.
In line~\ref{lst:line:relaxed-move}, we find a {\it relaxed plan} $p_\text{relaxed}$ that reaches a goal state, preferring to stay inside $v$, but allowing excursions if necessary.
Now, our objective is to find a path $p_\text{vis}$ that will observe some previously-unobserved part of the swept volume of $p_\text{relaxed}$, by calling procedure {\sc Vavp}.
If that call fails, then we fall back on an undirected exploration strategy, to view any part of the unviewed workspace.
Once we have found a view path, we update $p$, $v$ and $q$ based on $p_\text{vis}$, test to see if we can now find a path to the goal, etc.

\begin{algorithm}
	\caption{\textproc{Vamp\_Backchain}$((Q, E), W, V, q_0, \qgoal, v_0)$}
	\label{alg:backchain}
	\begin{algorithmic}[1]
			\Procedure{Vavp}{$q, R, v, O  = \emptyset$}
					\State $p_\text{vis} \gets \Call{Tourist}{q, R, v}$ \label{lst:vavp:line:unrelaxed-tourist}
					\If{$p_\text{vis} \not = Failed$}
							 {\bf return} $p_\text{vis}$
					\EndIf
					\State $O_\text{new} = O \cup R$
					\State $p_\text{relaxed} \gets \Call{Tourist}{q, R, v, \text{relaxed} = \text{true}, O = O_\text{new}}$ \label{lst:vavp:line:relaxed-tourist}
					\If{$p_\text{relaxed} \not = Failed$}
							\State $p_\text{vis} \gets \Call{Vavp}{q, S(p_\text{relaxed}) \setminus v, v, O = O_\text{new}}$
							\If{$p_\text{vis} \not = Failed$}
									 {\bf return} $p_\text{vis}$
							\EndIf
					\EndIf
					\State {\bf return} \text{Failed}
			\EndProcedure
			\State

			\State $p \gets [\;]; \;\;
							v \gets v_0; \;\;
							q \gets q_0$ \label{lst:line:vamp-start}
			\While{True}
					\State $p_\text{final} \gets \Call{Vamp\_Path\_Vis}{q, \qgoal, v}$ \label{lst:bc:line:unrelaxed-move}
					\If{$p_\text{final} \not = \text{Failed}$}
							 {\bf return} $p + p_\text{final}$
					\EndIf
					\State $p_\text{relaxed} \gets \Call{Vamp\_Path\_Vis}{q, \qgoal, v, \text{relaxed} = \text{true}}$ \label{lst:line:relaxed-move}
					\State $p_\text{vis} \gets \Call{Vavp}{q, S(p_\text{relaxed}) \setminus v, v}$
					\If {$p_\text{vis} = \text{Failed}$} {$p_\text{vis} \gets \Call{Tourist}{q, W \setminus v, v}$} \label{lst:line:see-anything}
					\EndIf
					\If {$p_\text{vis} = \text{Failed}$} {{\bf return} Failed} \label{lst:line:vamp-end}
					\EndIf
					\State $p \gets p + p_\text{vis}; \;\;
									v \gets v \cup V(p_\text{vis}); \;\;
									q \gets p_\text{vis}[-1]$
			\EndWhile
	\end{algorithmic}
	\end{algorithm}

The {\sc vavp} sub-procedure takes a configuration $q$, region of interest $R$, previously viewed volume $v$ and out-of-bounds area $O$ and returns a path that will view some part of $R$ without colliding with $O$, or, in a recursive call, view an unviewed part of the swept volume of a path that will view some part of $R$ without colliding with either $O$ or the new target region, etc.   If it cannot find any such recursively useful view path, it fails.
For visual simplicity we are omitting arguments $(Q, E), V)$ from calls to {\sc Tourist} and {\sc Vamp\_Path\_Vis}.

\label{sec:backchain}

Figure~\ref{fig:regression} illustrates the operation of {\sc Vamp\_Backchain} in the {\sc TwoHallway} domain.
It is given a goal to see the region at the end of the vertical hallway (red points in figure~\ref{subfig:goal_regression1}).
The hallway is keyed, and the robot can only see the region through the peephole.
	{\sc Vavp} generates a relaxed plan (gray) to see these
points.
This relaxed path is in violation, and requires regions be made visible (blue points in figure~\ref{subfig:goal_regression2}) before it can be executed.
	{\sc
		Vavp} recurses one level, with the blue region as the goal, and both
marked "out of bounds", and generates the gray path in figure~\ref{subfig:goal_regression2}.
This path satisfies the full constraints, and it is returned by {\sc Vavp}.
Note that the returned path does not satisfy the original goal, but achieves visibility to enable solving the original goal in a later call to {\sc Vavp}.

Two difficult examples that motivate the structure of the {\sc Vamp\_Backchain} algorithm are illustrated in figure~\ref{fig:nesting}.

In figure~\ref{subfig:backward-reasoning-fail}, the robot must move to the dashed outline on the right.
It cannot do so with step-wise visibility (line~\ref{lst:bc:line:unrelaxed-move}), so it makes a relaxed plan (line~\ref{lst:line:relaxed-move}) to slide horizontally to the goal.
However, none of the swept volume of that relaxed plan can be viewed (line~\ref{lst:vavp:line:unrelaxed-tourist}) under normal visibility constraints, nor can we even generate a relaxed plan to view it (line~\ref{lst:vavp:line:relaxed-tourist}), so {\sc Vavp} fails.
We fall back on simply generating a path that views some part of the un-viewed workspace (line~\ref{lst:line:see-anything}) which yields the path shown by the unfilled robot outlines.
The ultimate solution to the problem is indicated by the robot outlines.

In figure~\ref{subfig:arbitrary-nesting}, we see an example illustrating the potential need for arbitrary recursive nesting.
In this case, the inner walls are transparent (so the robot can see through them, but it cannot move through them.)
The solution requires moving forward into the bottom-most hallway to clear it, then moving into it again sideways to look through the windows, thus clearing the hallway above it, and so on.

\begin{figure}
	\begin{center}
		\subfloat[] 
		{
			\includegraphics[width=0.45\textwidth]{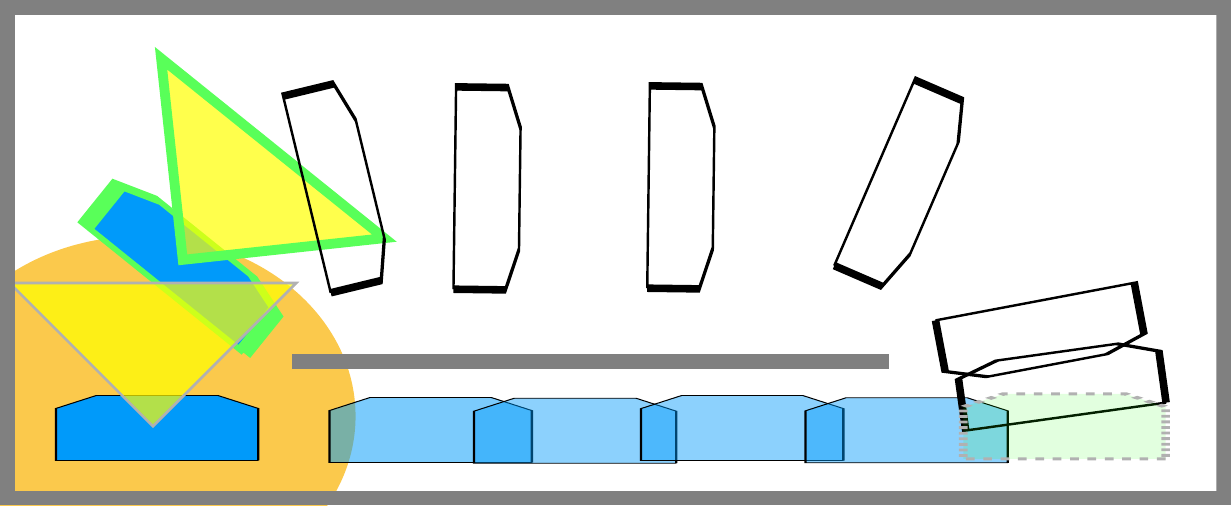}
			\label{subfig:backward-reasoning-fail}
		}
		\hfill
		\subfloat[] 
		{
			\includegraphics[width=0.29\textwidth]{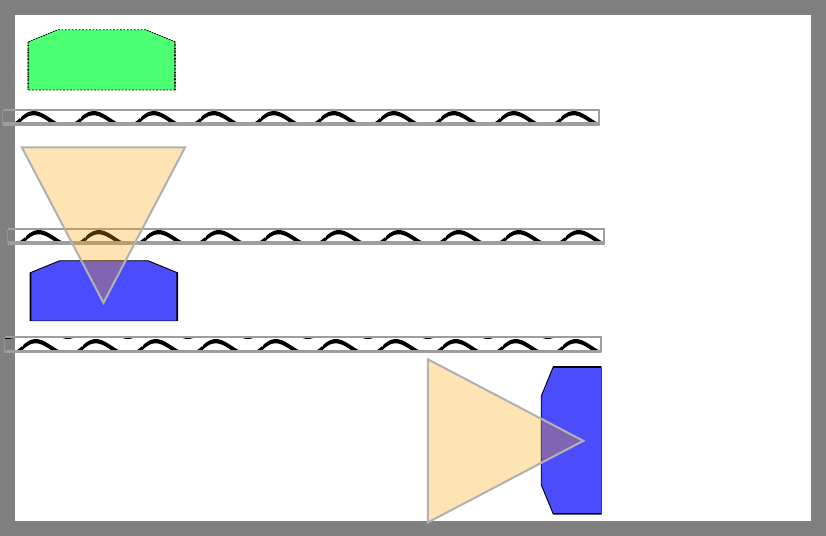}
			\label{subfig:arbitrary-nesting}
		}
	\end{center}
	\caption{Difficult examples for the {\sc Vamp\_Backchain} algorithm.}
	\label{fig:nesting}
\end{figure}

\begin{theorem}
  The algorithm {\sc Vamp\_Backchain} is correct and complete with respect to
  the configuration-space graph $(Q, E)$ for any robot such that
  $S(q_1, q_2) = S(q_2, q_1)$ for all $q_1, q_2$.
\end{theorem}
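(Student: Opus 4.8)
The plan is to reuse the two-part template from the \textsc{Vamp\_Tree} proof: first show soundness (any path \textsc{Vamp\_Backchain} returns is feasible and terminates at a goal configuration), then show completeness by contradiction (if some feasible path to a goal exists in $(Q,E)$, the backward search cannot get stuck before it is able to reconstruct one). As in \textsc{Vamp\_Tree}, the hypothesis $S(q_1,q_2)=S(q_2,q_1)$ is precisely what lets us pass between the forward direction, in which feasibility of a path is defined, and the backward direction, in which \textsc{Vamp\_Backchain} does its expansion; I would flag every place this symmetry is used.

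For soundness, I would first pin down the invariant that the backward search maintains: each item it keeps is a configuration $q$ tagged with (a representation of) the visibility that any prefix arriving at $q$ must already have accumulated, and every edge $(q,q')$ it commits toward the goal satisfies the visibility precondition on $S(q,q')$ against the visibility of the suffix already committed past $q'$, together with $v_0$. Reading the committed chain forward from $q_0$, an induction on the index $i$ then shows that the precondition of the $i$-th step is discharged by $V([q_0,\ldots,q_i])\cup v_0$ and that $q_{i+1}\in A(q_i)$, so the path is feasible; its last configuration satisfies $\qgoal$ because a goal configuration is exactly the seed of the regression. The reversibility assumption enters where an edge found as $(q',q)$ during backward expansion is walked as $(q,q')$ in the returned path, so that the relevant value of $S$ is unchanged.

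For completeness, I would take a feasible path $[q_0,\ldots,q_n]$ with $\qgoal(q_n)$ true, assume for contradiction that the algorithm halts without returning any feasible path to a goal, and consider the longest suffix $[q_j,\ldots,q_n]$ that the backward search manages to certify. Looking at the edge $(q_{j-1},q_j)$: feasibility of the given path gives $(q_{j-1},q_j)\in E$ with $S(q_{j-1},q_j)\subseteq V([q_0,\ldots,q_{j-1}])\cup v_0$, and by reversibility this edge is available when the search expands backward from $q_j$; since the space of configuration-plus-visibility states is finite and no feasible backward step is ever pruned, the search must eventually certify $q_{j-1}$, contradicting maximality of the suffix. Hence $q_0$ is reached and a path is returned, and, exactly as in the \textsc{Vamp\_Tree} argument, the reversibility hypothesis guarantees that the path-reconstruction step cannot fail, since a traversable connection exists wherever the search recorded one.

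The step I expect to be the main obstacle is making the backward ``state'' rigorous and proving it is at once sound (a certified state genuinely admits a feasible completion to the goal) and a complete abstraction (if some feasible path passes through $q$ with accumulated visibility $v$, then the requirement the search has recorded at $q$ is no stronger than what $v$ provides). This is the backward-search counterpart of the monotonicity fact used implicitly by the forward algorithms --- enlarging the visible set never disables an available move --- and getting the direction of that containment right, together with checking that subsuming or merging requirement sets on the agenda never discards a branch that a feasible path would need, is where the real work lies; once that lemma is in place, the finiteness/termination argument and the feasibility bookkeeping are routine.
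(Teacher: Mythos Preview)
Your proposal rests on a misreading of what \textsc{Vamp\_Backchain} actually does. The name does not denote a backward graph search or regression from the goal; the algorithm runs \emph{forward} in a main \textbf{while} loop that repeatedly invokes the subroutines \textsc{Vamp\_Path\_Vis} (in relaxed and unrelaxed modes), \textsc{vavp}, and \textsc{Tourist}, concatenating the sub-paths they return and accumulating visibility as it goes. ``Backchaining'' refers to the subgoaling heuristic---when the unrelaxed planner cannot yet reach the goal, the algorithm identifies swept volume that still needs to be seen and plans a detour to view it---not to expanding a search frontier backward from a goal configuration. Consequently, the invariant you posit (a configuration tagged with the visibility any prefix arriving there must have accumulated), your seed-the-regression-at-$q_n$ story, and your longest-certified-suffix contradiction do not correspond to anything the algorithm maintains or does, and the places you flag for using $S(q_1,q_2)=S(q_2,q_1)$ (reversing a backward-found edge during reconstruction) do not arise.

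The paper's argument has a different shape. Correctness holds because any returned path is a concatenation of sub-paths produced by \emph{non-relaxed} calls to \textsc{Vamp\_Path\_Vis} (directly or via \textsc{Tourist}), each feasible by construction, with the final such call guaranteeing $\qgoal$ at the end. Completeness is a progress argument over iterations of the \textbf{while} loop: two lemmas show that whenever the unrelaxed and relaxed attempts and \textsc{vavp} all fail, the fallback \textsc{Tourist} call returns a path that visits a previously unreached configuration, strictly enlarging the viewed region. Since $(Q,E)$ is finite, either the unrelaxed call on some iteration succeeds, or all reachable space is eventually seen and the algorithm correctly reports failure. The symmetry hypothesis is inherited through the feasibility guarantees of these subroutines, not used to flip a backward expansion. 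To repair the proof you would need to reason about this loop structure and the contracts of \textsc{Vamp\_Path\_Vis}, \textsc{vavp}, and \textsc{Tourist}, rather than about a backward frontier that the algorithm never builds.
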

\begin{proof}
  \ifForWAFR
  \proofInSupp
  \else
  If the algorithm returns an answer, it is a feasible path to a goal
  state.  The path is feasible because it is a concatenation of paths
  made by non-relaxed calls to {\sc Vamp\_Path\_Vis} (either directly or
  via calls to {\sc Tourist}), and those paths are feasible by
  construction. The final call to {\sc Vamp\_Path\_Vis} guarantees that
  the final configuration satisfies $\qgoal$.

  To show that it is complete, we begin with some lemmas.

  Lemma 1.  The {\sc vavp} procedure is guaranteed to terminate, and
  either return a path that will visit a configuration that has not been
  reached before or fail.

  Lemma 2.  If there is a feasible path and the call in line~\ref{lst:bc:line:unrelaxed-move} and line~\ref{lst:line:relaxed-move} fails,
  and the call to {\sc vavp} fails, then the call to {\sc Tourist} in
  line~\ref{lst:line:see-anything} is guaranteed to return a path that will visit a configuration
  that has not been visited before.

  By Lemmas 1 and 2, on every iteration of the main {\bf while loop},
  either the call to {\sc Vamp\_Path\_Vis} succeeds and finds and returns a solution
  path, or, a sequence of configurations will be added to the path that
  causes some not-yet-viewed space to be seen.

  The while loop terminates.  If a path does not exist, then eventually
  all the space that can be seen will have been seen and the call on
  line~\ref{lst:line:see-anything} will fail and the algorithm will terminate with failure.  If a
  solution path $[q_0, \ldots, q_n]$ does exist, then because repeated
  calls on line~\ref{lst:line:see-anything} will eventually visit all configurations that can be
  reached on feasible paths, and therefore see all the space that can be
  seen, then there is a point at which all of $S([q_0, \ldots, q_n])$
  will be in $v$ and so a call to {\sc Vamp\_Path\_Vis} on line~\ref{lst:bc:line:unrelaxed-move} will return with a solution.
  \qed

  \fi
\end{proof}

\section{Experiments}
\label{sec:experiments}
In our experiments, we consider a planar robot (\SI{1}{\meter} $\times$ \SI{1}{\meter}) operating in a 2D workspace.
For all of the experiments, we discretize the robot motions and search on a 6-connected lattice ($\Delta x =\Delta y=\SI{0.125}{\meter}$, $\Delta \theta = \frac{2\pi}{16}$).
The depth of view of the visible region is \SI{2.5}{\meter}.
All swept volume and containment computations were performed by sampling points along the boundary of the robot.


We ran two versions of {\sc Vamp\_Backchain}.
VB$\infty$ corresponds to the algorithm as presented in \ref{ssec:vampalg}, with the difference that all sub-calls to the planners are relaxed versions.
We never call the un-relaxed planner, however we still verify the feasibility of paths before incorporating them into the final path.
This choice has the benefit of accelerating the search procedure, since we do not have to wait for {\sc Tourist} to return failure in situations where the search is incomplete.
In practice, the relaxed planners often return a feasible path if one exists, but occasionally they produce a violating path, which means subsequent searches may do unnecessary work to provide visibility in the violated region.

VB$1$ corresponds to {\sc Vamp\_Backchain}, but with a recursion depth limit.
In this variation, in the first call to {\sc Vavp}, lines 4-5 are skipped.
In recursive calls, lines 4 is executed and its result is returned.
Furthermore, instead of waiting for this call to {\sc Tourist} to fail we set a timeout, to trigger the subsequent call to {\sc Tourist}.

We run experiments on many instances of \vamp{} problems.
Instances vary in obstacle, start and goal states, and field of view of the vision sensor.

There are three combinations of obstacle layout and start/goal states, each exhibiting increasing problem difficulty: {\sc HallwayEasy} depicted in figure~\ref{narrowCorner}, {\sc HallwayHard} depicted in figure~\ref{backup}, and {\sc TwoHallway} depicted in figure~\ref{subfig:two-hallway-subgoals}, which contains a ``keyed'' vertical hallway, which can only be entered backwards.


For each experiment, we report search time, path length, and total number of nodes expanded in any subroutine searches.

\begin{table}[h]
	\begin{tabular}{llrrrrrr}
\toprule 
 &  & \multicolumn{2}{c}{fov=\ang{50}} & \multicolumn{2}{c}{fov=\ang{200}} & \multicolumn{2}{c}{fov=\ang{350}} \\
\cmidrule(lr){3-4} \cmidrule(lr){5-6} \cmidrule(lr){7-8}
 &  & VB1 & VB$\infty$ & VB1 & VB$\infty$ & VB1 & VB$\infty$ \\
Search time (s) & {\sc HallwayEasy} & 5.1 & 16.4 & 1.2 & 1.2 & 1.0 & 2.2 \\
 & {\sc HallwayHard} & 23.4 & 315.3 & 9.4 & 13.9 & 2.4 & 3.5 \\
 & {\sc TwoHallway} & 2868.3 & 281.3 & 638.5 & 220.9 & 1952.1 & 134.8 \\
\midrule 
Path length (m) & {\sc HallwayEasy} & 12.3 & 13.3 & 8.4 & 8.4 & 8.4 & 8.4 \\
 & {\sc HallwayHard} & 14.3 & 16.9 & 12.5 & 12.5 & 11.4 & 11.4 \\
 & {\sc TwoHallway} & 63.9 & 43.4 & 47.6 & 43.2 & 40.6 & 34.3 \\
\midrule 
Closed nodes & {\sc HallwayEasy} & 2578 & 9241 & 377 & 377 & 137 & 137 \\
 & {\sc HallwayHard} & 7667 & 40428 & 3436 & 4469 & 604 & 604 \\
 & {\sc TwoHallway} & 139484 & 64145 & 76083 & 62586 & 92184 & 44188 \\
\midrule 
\bottomrule 
\end{tabular}

\end{table}

{\sc TwoHallway} is designed to demonstrate the recursion capabilities
of \\{\sc Vamp\_Backchain}, so VB$\infty$ noticeably outperforms VB$1$ on it.
Because VB$1$ does not perform backchaining more than once, it relies on line~\ref{lst:line:see-anything} of {\sc Vamp\_Backchain}.
In practice, for problems exhibiting the nested dependency as in {\sc TwoHallway}, VB$1$ generates paths that view the whole space because the search cannot be guided through nested dependencies.

Situations in which VB$\infty$ performs worse are due to sub-optimal relaxed paths, which incur violations that could be avoided.

We also collected search times and tree size for the {\sc TreeVis} algorithm.
For {\sc TwoHallway}, it expands ~62,000 nodes and searches for 60 seconds.
Note that this does not include any time for generating a path.
The na\"{i}ve path  would include every edge in the tree, visiting every node in search order, which would never be a practical path.
Note additionally that {\sc TreeVis} is not a directed search, and so in domains where the workspace is large, it is unlikely that {\sc TreeVis} will be practical.


\section{Discussion}

\vamp{} instances are challenging when the domain requires plans that achieve visibility in order to perform a motion in the future.
We present two small instances, {\sc HallwayHard} and {\sc TwoHallway} that have this property.
Solving the problem directly in the belief space is computationally intractable.
We, instead, direct the search by relying on calls to constraint-relaxed plans.
The setting we consider in this paper is fully deterministic, and in future work we are interested in handing uncertainty on the pose of ``known'' objects, and uncertainty on the pose of the robot due to odometry and localization errors.

\ifForWAFR
\else
\section{Post-processing to minimize views}

Each of the algorithms returns a path of consecutive configurations
such that, if the robot were to take and process an image at every
configuration, the path would be safe.  However, when imaging requires
the robot to be stationary or the processing is slow, it is desirable
to minimize the number of images required while still guaranteeing
safety.  To select which configurations actually require an image to
be acquired and processed, we simply run a greedy set-cover algorithm,
and then annotate the configurations in the path to indicate whether
the robot should take an image there.

Many mobile-manipulation robots have heads that pan and tilt.  If the
head is such that moving it substantially changes the robot configuration
from a collision-avoidance perspective (e.g., it can periscope up and down) then
it may be necessary to include the degrees of freedom of the head in
the robot's configuration, $q$, and apply the algorithms in this paper
directly.  However, when moving the head makes a relatively small
change in the swept volume of the robot, planning for the head can be
decoupled from planning for the rest of the robot.  We do this in two
phases.  First, we run one of the \vamp{} algorithms of this paper in
the configuration space of the robot, but without including the head's
degrees of freedom.  We use a visibility function $V$ that includes
{\it the union all possible views} that can be obtained by moving the
head, given the rest of the configuration $q$.  When a path is
returned, we post-process by selecting not just what robot
configurations require an image but also which orientation(s) the head
should have when taking the images.  We accomplish this by
partitioning the viewable region of space into a finite set and
associate a head configuration with each element.  Then, when we do
the greedy set-cover algorithm, we run it over the product of the body
configurations in the path and the possible head configurations.
\fi

\makeatletter
\renewcommand\section{\@startsection{section}{3}{\z@}%
	{-3.25ex\@plus -1ex \@minus -.2ex}%
	{-1.5ex \@plus .2ex}%
	{\normalfont\normalsize\bfseries}}
\makeatother

\section*{Acknowledgements}
\begin{small}
We gratefully acknowledge support from NSF grants 1523767 and 1723381; from AFOSR grant FA9550-17-1-0165; from ONR grant N00014-18-1-2847; from Honda Research; and from Draper Laboratory.  Any opinions, findings, and conclusions or recommendations expressed in this material are those of the authors and do not necessarily reflect the views of our sponsors.
\end{small}

\ifusePlainBib
\bibliography{references}
\bibliographystyle{unsrt}
\else
\printbibliography
\fi
\end{document}
